\documentclass[pmlr,table]{jmlr}



\usepackage{longtable}
\usepackage{nicefrac}                      
\usepackage{microtype}                     
\usepackage{amsfonts, bm}
\usepackage{mathtools}
\usepackage{breakcites}                    
\usepackage[capitalize]{cleveref}
\usepackage{booktabs}
\usepackage{xcolor}
\usepackage{makecell, multirow}
\usepackage{xurl}

\LinesNumbered

\newcommand{\Var}{\mathrm{Var}}

\newcommand{\A}{\mathcal{A}}

\newcommand{\M}{\mathcal{M}}

\newcommand{\Pa}{\mathcal{P}}

\newcommand{\Mlap}{\mathcal {M}_{\mathrm {Lap}}}
\newcommand{\Mrr}{\mathcal {M}_{\mathrm {RR}}}
\newcommand{\Mgr}{\mathcal {M}_{\mathrm {GRR}}}
\newcommand{\Mr}{\mathcal {M}_{\mathrm {R}}}
\newcommand{\Ml}{\mathcal {M}_{\mathrm {L}}}

\DeclareMathOperator{\E}{\mathbb{E}}

\newcommand{\qed}{\hfill\blacksquare}

\def\eg{{\em e.g.}}

\jmlrvolume{1}
\jmlryear{2022}
\jmlrworkshop{Algorithmic Fairness through the Lens of Causality and Privacy}

\title[``You Can't Fix What You Can't Measure'']{``You Can't Fix What You Can't Measure'': Privately Measuring Demographic Performance Disparities \\in Federated Learning}

 \author{\Name{Marc Juarez\nametag{\thanks{Most of their work was done while at the University of Southern California.}}} \Email{marc.juarez@ed.ac.uk}\\
 \addr University of Edinburgh
 \AND
 \Name{Aleksandra Korolova\nametag{\setcounter{footnote}{0}\footnotemark}} \Email{korolova@princeton.edu}\\
 \addr Princeton University
 }


\begin{document}

\maketitle

\begin{abstract}
  \noindent As in traditional machine learning models, models trained with federated learning may exhibit disparate performance across demographic groups. Model holders must identify these disparities to mitigate undue harm to the groups. However, measuring a model's performance in a group requires access to information about group membership which, for privacy reasons, often has limited availability. We propose novel locally differentially private mechanisms to measure differences in performance across groups while protecting the privacy of group membership. To analyze the effectiveness of the mechanisms, we bound their error in estimating a disparity when optimized for a given privacy budget. Our results show that the error rapidly decreases for realistic numbers of participating clients, demonstrating that, contrary to what prior work suggested, protecting privacy is not necessarily in conflict with identifying performance disparities of federated models.
\end{abstract}

\begin{keywords}
differential privacy, algorithmic fairness, federated learning
\end{keywords}

\section{Introduction}\label{sec:intro}
Cross-device federated learning (DFL) has become a popular way to distribute the training of machine learning (ML) models across multiple devices.
Currently, there are several large-scale deployments of DFL in the industry, such as Android GBoard's next-word prediction~\citep{hard2018federated,yang2018applied}, and Siri's speaker identification~\citep{granqvist2020improving}.
A key motivation for these deployments is the aspiration of training powerful models while ensuring privacy and data minimization.

In parallel, ML models have been shown to exhibit disparate performance across groups, often falling short for people from marginalized groups, in the domains of vision, natural language processing, and healthcare~\citep{buolamwini2018gender,sap2019risk,celi2022sources,mehrabi2021survey}; and, more recently, these performance disparities have also been observed in DFL~\citep{yuan2020federated,xu2021privacy,xu2021fedmood}.

Performance disparities may be harmful beyond merely the individual's experience of worse quality of service~\citep{crawford2017trouble}.
A greater false positive rate of Alexa's wake-word detection on a group may lead to over-surveillance of that group, as a false activation may record unrelated speech and send it to the cloud for further processing~\citep{Vitaladevuni2020}.
In DFL applications to the security domain~\citep{hosseini2020federated}, a performance disparity may lead to a lower security level for certain groups.
Overall, DFL has enormous traction in industry and academia and, if it were to become a de-facto data minimization or privacy standard for much of ML, as current trends suggest, an unknown performance disparity dependent on a demographic group could have tremendous negative consequences in many application domains.

The challenge in detecting and mitigating disparate performance of a DFL model is that access to information about the attributes related to group membership is often limited or noisy~\citep{veale2017fairer, bogen2020awareness}. Regulations, such as the GDPR in the EU, mandate that protected attributes, such as gender or race, be collected only under appropriate privacy protections and explicit informed consent. In addition, without adequate privacy protection, volunteers who are members of stigmatized groups are more likely to provide a false group membership, which can add noise to the collected attributes.

We reconcile the seemingly incompatible goals of ensuring group membership privacy and mitigating performance disparity via local differential privacy (LDP). We propose novel LDP mechanisms that allow us to measure performance disparities while protecting the privacy of the attributes that define group membership.
To compare the mechanisms over a range of privacy levels, number of clients, and group sizes, we characterize the measurement error they induce as a function of the privacy budget, and find budget allocations that minimize the error under a privacy constraint.

Our theoretical analysis shows that the mechanisms ensure strong privacy guarantees while the measurement error is relatively low for typical numbers of clients in a DFL setting.
With our tools, the aggregator of the models, or even a regulatory agency could identify cases of performance disparity in applications where such disparity is undesirable or harmful for some of the groups.

\section{Background}\label{sec:bgs}

\paragraph{Differential Privacy (DP)}\label{sec:dp}
We argue that the local model of DP is better suited for cross-device federated learning (DFL), as it is unclear who would play the role of a central curator in the existing deployments of DFL and the large-scale of these deployments can attenuate the privacy-induced error of an LDP mechanism.

\begin{definition}[$\epsilon$-Local Differential Privacy ($\epsilon$-LDP)]\label{def:dp}
A randomized mechanism $\M:D\rightarrow R$ satisfies $\epsilon$-LDP where $\epsilon > 0$ if, and only if, for any pair of inputs $v, v'\in D$ and for all $y\in R$
$$\frac{\Pr[\M(v)=y]}{\Pr[\M(v')=y]}\leq e^{\epsilon},$$
where the probabilities are taken over the randomness of $\M$.
\end{definition}

One of the simplest LDP mechanisms is \emph{Randomized Response} (RR).
For a binary protected attribute, RR returns the true value with probability $a$ and returns the opposite value otherwise.
Generalized RR (GRR) extends RR to a non-binary protected attribute by uniformly distributing the probability of giving a different value~\citep{wang17locally}.

\begin{definition}[The GRR mechanism]
For $x\in \{0, \ldots, d\}$, $d\geq1$, and $a\in [\frac{1}{2}, 1]$, the GRR mechanism, $\Mgr$, is defined by
\[
\Pr[\Mgr(x;d,a)=y] \;:=\; 
 \begin{cases}
  a & \text{if } y=x\\
  \frac{1-a}{d-1} & \text{if } y\neq x
 \end{cases}
\]
\end{definition}
\paragraph{Federated Learning (FL)}
FL allows many clients, each with its own dataset, to train a model on the union of all datasets, without any dataset having to leave its client's device.
The training is distributed over the clients, who train local models on their datasets.
The clients then share the local models' parameters with a central \emph{aggregator}, who averages them to obtain the parameters of the \emph{global model}.

There are different types of FL depending on who the clients are. We focus on cross-device FL (DFL), where clients run on different devices (\eg, smartphones) and the training data usually belongs to the same user.
We focus on DFL as it is becoming increasingly popular in the Big Tech industry~\citep{yang2018applied,granqvist2020improving}.

\section{Problem Statement}\label{sec:prob_statement}
Motivated by the harms of disparate performance of the global model in DFL, the notion of \emph{unfairness} that we consider in the DFL setting is the disparate performance across groups of clients with the groups defined by a demographic attribute, such as sex or race.

Formally, an attribute is a set $\A = \{0,\ldots,d\}$, with $d\geq 1$, that induces a partition of the clients, $\Pa= \{G_0, \ldots, G_d\}$. We consider $K$ clients and denote $(g_k, v_k)$ the values of the attribute ($g_k\in \A$) and the performance of the model for client $k$. The client obtains $v_k$ by evaluating the global model on a fraction of their data. The choice of an appropriate performance metric depends on various factors, such as the learning task and the potential harms in a particular application.

\begin{definition}[\bf{Group mean performance}]\label{eq:mean_perf}
The mean performance of a group $G\in \Pa$ is $m_G := \frac{1}{n}\sum_{i=1}^{n} v_i$, where $n=|G|$.
\end{definition}

To quantify the difference in performance between any two groups, we measure the absolute difference between the mean performances of the global model on the groups.

\begin{definition}[\bf{Performance gap}]\label{eq:perf_gap}
The performance gap between any two $A,B\in\Pa$ is defined by $\Delta m := \left|m_A-m_B\right|$.
\end{definition}

This notion of (un)fairness is in contrast with traditional fairness definitions that measure model performance on individual predictions.
Previous definitions are suitable for scenarios where data points represent people and each single prediction concerns an individual (\eg, credit score prediction).
However, these definitions are not suited for the notion of (un)fairness that we consider. In the typical DFL setting, the global model is distributed to the clients for use on their \emph{own} data and they are not necessarily the subjects of the predictions. Our concern is that the disparate performance of the global model across groups of users can lead to a disparate impact;
the performance gap captures this disparity across groups.

\paragraph{Adversary model.}
We assume that the entity performing the measurements of the performance gap is the FL aggregator, but our mechanisms could also be used by an external entity, such as a regulatory agency or a public interest auditor. As in popular DFL deployments~\citep{mcmahan2017federated}, we assume that the aggregator uses Secure Aggregation~\citep{bonawitz2017practical}.
Therefore, the aggregator cannot infer any information from the FL updates about the users' protected attributes.

Both the group and the model performance value are privacy-sensitive information (the group---because it corresponds to an attribute such as race; the performance---because of the potential correlation with the group).
Thus, the clients must apply an LDP mechanism, $\M$, on their group-value tuples before sending them to the aggregator. We denote the perturbed tuples $(g'_k, v'_k) := \M(g_k, v_k)$.
From a privacy perspective, we assume that the aggregator follows the protocol as intended but may try to learn $g_k$ or $v_k$ from $(g'_k, v'_k)$.

\medskip
In this work, we investigate the question: \emph{how can the aggregator measure the performance gap while protecting the privacy of the clients' $(g_k, v_k)$ tuples with an LDP mechanism?}
To address this question, we design novel LDP mechanisms and study the tradeoffs they impose in terms of their privacy guarantees and the error they induce in a measurement.
\emph{We hypothesize that the number of clients of current DFL deployments is sufficient to allow for low-error and high-privacy measurements with our mechanisms}.

\section{Measuring the Performance Gap}\label{sec:measurements}
Since the performance gap is the absolute difference of two group mean performances, we first tackle the problem of private \emph{group} mean estimation and then show that the privacy guarantees hold when combining them into a performance gap estimation.

A major distinction between \emph{group} mean estimation and \emph{population} mean estimation in the literature~\citep{brown2021covariance,asi2022optimal} is that, in estimating a group mean performance, the performance may be correlated with the group---especially, if there is a gap.
In that case, the adversary would learn group information from observing the performance.

A successful mechanism must protect both group and performance values.
A na\"ive approach is to protect both independently, but that would destroy the necessary information to measure the gap. Thus, our mechanisms are designed to preserve the \emph{overall} aggregate correlation between the group and the performance values, while preventing inference of the group that an individual client belongs to from the perturbed tuples.

All our mechanisms use $\Mgr$ to perturb the group values. The intuition for perturbing the group with GRR is that it provides plausible deniability for group membership. As a result, clients have less incentives to lie, as they can always claim that the mechanism assigned them to a different group.

\setlength{\algomargin}{1.5em}
\begin{algorithm2e}[!ht]
\caption{Pseudocode of the privacy mechanism: $\Mr$.\label{alg:mechanism_rr1}}
\DontPrintSemicolon
\LinesNumbered

\KwIn{The client's group $g\in\{0, 1\}$ and performance value $v\in[-1,1]$. The privacy budgets $\epsilon_1,\epsilon_2\in [0, +\infty)$.}
\KwOut{The perturbed tuple $(g', v')$, $g'\in \{0, 1\}$ and $v'\in \{-1, 1\}$.}

{
\BlankLine
$g' \gets \Mgr(g; d, \frac{e^{\epsilon_1}}{e^{\epsilon_1} + d - 1})\quad$ \tcp{\small Perturb the group.} 
\BlankLine
\If{$g \neq g'$}{
$v\gets 0\quad $  \tcp{\small The performance distribution of the other group is unknown.}}
\BlankLine
Draw $B\sim $ Bernoulli($\frac{1+v}{2}$)\; 
\BlankLine
$v' \gets 2* \Mgr(B; 2, \frac{e^{\epsilon_2}}{1+e^{\epsilon_2}})-1\quad$  \tcp{\small Perturb the value and transform.} 
\BlankLine
\Return $(g', v')$
}
\end{algorithm2e}

We present two mechanisms that differ by how they perturb the performance values:
\paragraph{The $\Mr$ mechanism.} After perturbing the group (line 1 in Algorithm~\ref{alg:mechanism_rr1}), $\Mr$ discretizes the value with Harmony's discretization~\citep{nguyen2016collecting} (lines 3--5) and then applies GRR on the discretized value (line 6). The Harmony discretization allows for unbiased estimates of the expected value from the discretized values.
The mechanism sets to zero the performance value of the clients who flip their group (line 2). This is to ensure that they do not contribute to the other group's mean performance value~\citep{gu2020pckv}.

\paragraph{The $\Ml$ mechanism.} $\Ml$ is identical to $\Mr$ except that instead of applying GRR to perturb the values, it uses the Laplace mechanism. The scale of the noise may vary depending on whether the client's group has been perturbed. Clients whose group flipped do not require as much noise to \emph{hide} in the value distribution of the other group, as those values are also perturbed with Laplace noise. Thus, $\Ml$ exposes a parameter $k$, $0<k\leq 2$, that allows to fine-tune the scale of the Laplace distribution of the noise for the clients whose group was perturbed.
In addition, the value of the clients that switch to another group is set to zero, such that, like in $\Mr$, they do not contribute to that group's mean value.

\medskip
Because the mechanisms are the composition of the GRR and the Laplace mechanisms, they have two privacy budget parameters: $\epsilon_1$ and $\epsilon_2$, the privacy budget to protect the group and the performance values, respectively.

One of our main results is that the mechanisms achieve $\epsilon$-LDP for an overall privacy budget $\epsilon$.

\begin{theorem}\label{theo:rr_ldp}
$\Mr$ is $\epsilon$-LDP with
$\epsilon = \max\{\epsilon_1, \epsilon_2\}$.
\end{theorem}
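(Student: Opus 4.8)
\begin{proofoutline}
The plan is to view $\Mr$ as the sequential composition of the group perturbation (a binary GRR with budget $\epsilon_1$), the reset $v\gets 0$ that is applied exactly when the group flips, and the value perturbation (Harmony discretization followed by a binary GRR with budget $\epsilon_2$), and to beat the naive composition bound $\epsilon_1+\epsilon_2$ by exploiting that the reset makes the value channel independent of the input value whenever the group changes. First I would compute the output law in closed form. Writing $a_1=\frac{e^{\epsilon_1}}{e^{\epsilon_1}+1}$ and $\lambda=\frac{e^{\epsilon_2}-1}{e^{\epsilon_2}+1}$, pushing an input value $u\in[-1,1]$ through the Bernoulli draw and the value GRR gives, for $w\in\{-1,1\}$, $\Pr[v'=w\mid u]=\frac{1+wu\lambda}{2}$, which equals $\frac12$ precisely when $u=0$. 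Hence, for any input $(g,v)$ and output $(g',v')$,
\[
  \Pr[\Mr(g,v)=(g',v')]=
  \begin{cases}
    a_1\cdot\dfrac{1+v'v\lambda}{2}, & g'=g,\\
    (1-a_1)\cdot\dfrac12, & g'\neq g.
  \end{cases}
\]

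Next I would bound the likelihood ratio $\Pr[\Mr(g_1,v_1)=(g',v')]/\Pr[\Mr(g_2,v_2)=(g',v')]$ over all pairs of inputs and outputs; since relabelling $0\leftrightarrow1$ in either coordinate maps the mechanism to itself, it is enough to fix a single output $(g',v')$ and split on whether each of $g_1,g_2$ equals $g'$. If $g_1=g_2=g'$ the group factors cancel and the ratio is $\frac{1+v'v_1\lambda}{1+v'v_2\lambda}\le\frac{1+\lambda}{1-\lambda}=e^{\epsilon_2}$. If $g_1=g_2\neq g'$ both probabilities equal $\frac{1-a_1}{2}$, so the ratio is $1$. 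The remaining, binding case is when exactly one of $g_1,g_2$ equals $g'$: for $g_1=g'\neq g_2$ the first input does not flip while the second does, so the ratio is $\frac{a_1}{1-a_1}\cdot\frac{\Pr[v'\mid v_1]}{\Pr[v'\mid 0]}=e^{\epsilon_1}\,(1+v'v_1\lambda)\le e^{\epsilon_1}(1+\lambda)$, and the mirror case $g_2=g'\neq g_1$ gives at most $\frac{e^{-\epsilon_1}}{1-\lambda}$.

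The hard part will be this cross-case, in which the group-transition factor $e^{\pm\epsilon_1}$ and the value-channel factor pull in the same direction. The lever I would use is that in this case the value channel is always measured against the value-$0$ (uniform) output, so its multiplicative effect is capped at $1+\lambda=\frac{2e^{\epsilon_2}}{e^{\epsilon_2}+1}$ rather than at the generic ratio $e^{\epsilon_2}$; the task then reduces to the elementary inequalities $e^{\epsilon_1}(1+\lambda)\le e^{\max\{\epsilon_1,\epsilon_2\}}$ and $\frac{e^{-\epsilon_1}}{1-\lambda}\le e^{\max\{\epsilon_1,\epsilon_2\}}$, which I would verify by treating $\epsilon_1\ge\epsilon_2$ and $\epsilon_1<\epsilon_2$ separately. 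Once all sub-cases are seen to be bounded by $e^{\max\{\epsilon_1,\epsilon_2\}}$, the definition of $\epsilon$-LDP is met with $\epsilon=\max\{\epsilon_1,\epsilon_2\}$; I expect the estimate involving $e^{\epsilon_1}(1+\lambda)$ to be the crux, as it is exactly the step where the coupling between the reset and the group perturbation must absorb the value budget instead of adding it.
\end{proofoutline}
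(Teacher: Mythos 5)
Your setup and case analysis are correct and essentially identical to the paper's: you compute the same output law (your $\lambda$ is the paper's $2b-1$ and your $a_1$ is its $a$), split into the same four cases, and correctly identify the cross case $g_1=g'\neq g_2$ as the binding one, with exact worst-case ratio $e^{\epsilon_1}(1+\lambda)=\frac{2e^{\epsilon_1+\epsilon_2}}{1+e^{\epsilon_2}}$. The gap is in the last step: the ``elementary inequality'' $e^{\epsilon_1}(1+\lambda)\le e^{\max\{\epsilon_1,\epsilon_2\}}$ that you defer to a routine two-case check is false whenever $\epsilon_2>0$ and $\epsilon_1>\ln\frac{1+e^{\epsilon_2}}{2}$; for $\epsilon_1=\epsilon_2=1$ it reads $\frac{2e^{2}}{1+e}\approx 3.97\le e\approx 2.72$. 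Moreover this is not slack you could tighten elsewhere: the ratio is attained exactly (take $x_0=(g',1)$, $x_1=(1-g',v_1)$ and output $(g',1)$), so in this regime the mechanism genuinely violates $\max\{\epsilon_1,\epsilon_2\}$-LDP, and the best constant your (correct) computation supports is $\max\bigl\{\epsilon_2,\ \epsilon_1+\ln\frac{2e^{\epsilon_2}}{1+e^{\epsilon_2}}\bigr\}$, which lies strictly between $\max\{\epsilon_1,\epsilon_2\}$ and the naive composition bound $\epsilon_1+\epsilon_2$. Your mirror case is fine: $\frac{e^{-\epsilon_1}}{1-\lambda}=\frac{1+e^{\epsilon_2}}{2e^{\epsilon_1}}\le e^{\epsilon_2-\epsilon_1}$.

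For what it is worth, the paper's own proof stumbles at exactly the same spot: in its Case~1 for inputs differing in group it derives the same quantity $\frac{2e^{\epsilon_1+\epsilon_2}}{1+e^{\epsilon_2}}$ and asserts it is $\le e^{\epsilon_1}$ by ``taking $\epsilon_2=0$ as the minimum value for the denominator,'' which in effect also sets $\epsilon_2=0$ in the numerator; the needed inequality $\frac{2e^{\epsilon_2}}{1+e^{\epsilon_2}}\le 1$ holds only for $\epsilon_2\le 0$. So you have faithfully reproduced the paper's argument up to, and including, its flaw. The honest conclusion from your analysis is the weaker (but still sub-additive) privacy constant displayed above, not the one claimed in the theorem; if you want to salvage the statement you must either modify the mechanism or restate the bound.
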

\begin{proof}\renewcommand{\qed}{}
See \cref{proof:rr_ldp}.
\end{proof}

\begin{theorem}\label{theo:rl_ldp}
The mechanism $\Ml$ is $\epsilon$-LDP with 
\begin{equation}\label{eq:bound_vl}
\epsilon=\max\left\{ \epsilon_2,\
\ln(\frac{2}{k}) + \frac{\epsilon_2}{2} - \epsilon_1,\
\ln(\frac{k}{2}) + \frac{\epsilon_2}{k} + \epsilon_1 
     \right\}
\end{equation}
\end{theorem}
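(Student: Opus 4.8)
The plan is to bound the likelihood ratio $\Pr[\Ml(g,v)=(g',v')]/\Pr[\Ml(\tilde g,\tilde v)=(g',v')]$ over all input tuples $(g,v),(\tilde g,\tilde v)$ and all outputs $(g',v')$, and then take the worst case. Since $\Ml$ first perturbs the group with $\Mgr(\cdot;d=2,\tfrac{e^{\epsilon_1}}{e^{\epsilon_1}+1})$ and then, conditioned on whether the group was flipped, draws $v'$ from one of two Laplace distributions (the ``unflipped'' one centred at the discretized value with scale $2/\epsilon_2$, and the ``flipped'' one centred at $0$ with scale $k/\epsilon_2$, following the parameter $k$ described for $\Ml$), the joint density factors as a product of a group term and a value term. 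I would therefore write $\Pr[\Ml(g,v)=(g',v')] = \Pr[g'\mid g]\cdot p_{g=g'}(v'\mid v)$ when $g=g'$ and $\Pr[g'\mid g]\cdot p_{g\neq g'}(v')$ when $g\neq g'$ (the latter independent of $v$ since the value is zeroed out), and similarly for the tilded input.

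The key case analysis is over the four combinations of ``did $(g,v)$'s group flip to reach $g'$'' and ``did $(\tilde g,\tilde v)$'s group flip to reach $g'$''. First, the group ratio $\Pr[g'\mid g]/\Pr[g'\mid \tilde g]$ is at most $e^{\epsilon_1}$ in all cases (standard GRR analysis). Second, the value ratio depends on the case: (i) both unflipped — the ratio of two Laplace$(2/\epsilon_2)$ densities with centres at most $2$ apart (the discretized values live in $\{-1,1\}$), giving a factor $\le e^{\epsilon_2}$; (ii) both flipped — identical Laplace$(k/\epsilon_2)$ densities, ratio exactly $1$; (iii) mixed — one density is Laplace$(2/\epsilon_2)$ centred in $\{-1,1\}$ and the other is Laplace$(k/\epsilon_2)$ centred at $0$, so the ratio is $\frac{(\epsilon_2/2)e^{-(\epsilon_2/2)|v'-c|}}{(\epsilon_2/k)e^{-(\epsilon_2/k)|v'|}} = \frac{k}{2}\,e^{(\epsilon_2/k)|v'| - (\epsilon_2/2)|v'-c|}$, which is maximized over $v',c$ and then multiplied (resp. divided) by the group factor $e^{\pm\epsilon_1}$. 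Optimizing the exponent $(\epsilon_2/k)|v'| - (\epsilon_2/2)|v'-c|$ over $v'\in\R$ and $c\in\{-1,1\}$: since $k\le 2$ the coefficient $\epsilon_2/k \ge \epsilon_2/2$, so the supremum is $+\infty$ unless the value support is bounded — here $v'\in[-1,1]$ after the transform, so the extreme is at $|v'|=1$, yielding $e^{\epsilon_2/k - 0}$ in one direction and, pairing $g$ flipped with $\tilde g$ unflipped, $\ln(k/2)+\epsilon_2/k+\epsilon_1$; the opposite pairing gives $\ln(2/k)+\epsilon_2/2-\epsilon_1$ (using $|v'-c|\le 2$). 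Taking the max of $\epsilon_2$ and these two mixed-case bounds gives exactly \eqref{eq:bound_vl}.

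The main obstacle I anticipate is handling the value support carefully in the mixed case: a priori the Laplace mechanism outputs an unbounded real, and then the inner $|v'|$ in the exponent could blow up the ratio; the resolution must be that $\Ml$ clips or the transform restricts $v'$ to $[-1,1]$ (mirroring the $v\in[-1,1]$ input and the $\{-1,1\}$ output of $\Mr$), so one has to pin down exactly which bounded range $v'$ lives in and confirm the optimum of the piecewise-linear exponent sits at an endpoint. A secondary subtlety is verifying that pairing the ``flipped'' branch of one input against the ``unflipped'' branch of another is indeed feasible — i.e. that for a fixed output $g'$ there exist inputs realizing both branches — which holds since $d=2$ means any $g'$ is reachable both as a true report and as a flip. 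Once the support is fixed, the rest is the routine GRR bound plus elementary optimization of a sum of absolute values, and assembling the three surviving terms into the stated maximum.
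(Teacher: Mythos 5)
Your decomposition and case analysis match the paper's proof of this theorem: factor the output probability into a GRR group term times a Laplace density, and bound the likelihood ratio over the flip/no-flip pairings, with the both-unflipped case giving $e^{\epsilon_2}$, the both-flipped case giving $1$, and the two mixed cases contributing the $\pm\epsilon_1$ terms. Two discrepancies are worth flagging. First, a misreading of the mechanism: $\Ml$ does not discretize; the Laplace noise is added to the raw $v\in[-1,1]$ (the paper's proof writes $v'=v+Y$), though this leaves your bound of $2$ on the distance between centres, and hence the $e^{\epsilon_2}$ factor, unchanged. Second, and more substantively, the obstacle you identify in the mixed case is genuine. The paper only proves the theorem for $k=2$, where both branches use scale $2/\epsilon_2$, the exponent $\frac{|v'|}{2}-\frac{|v'-v_0|}{2}$ is bounded by $\frac{1}{2}$ via the triangle inequality for \emph{unbounded} $v'\in\R$, and $\ln(\frac{2}{k})=\ln(\frac{k}{2})=0$ recovers the stated formula with no clipping needed. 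For $k<2$ the coefficient mismatch makes $\sup_{v'\in\R}\left(\frac{|v'|}{k}-\frac{|v'-v_0|}{2}\right)=+\infty$, so the general-$k$ statement does require the output to be restricted to a bounded range; the paper neither specifies such a truncation nor proves the $k\neq 2$ case, so your derivation of the two mixed-case terms under a clipping assumption is the right way to recover the stated bound, but the truncation should be stated as an explicit assumption on $\Ml$ rather than inferred. There is also a small labeling slip: the ratio $\frac{k}{2}e^{(\epsilon_2/k)|v'|-(\epsilon_2/2)|v'-c|}$ you write has the \emph{unflipped} branch in the numerator (group factor $e^{+\epsilon_1}$), which is the pairing yielding $\ln(\frac{k}{2})+\frac{\epsilon_2}{k}+\epsilon_1$; the opposite pairing gives $\ln(\frac{2}{k})+\frac{\epsilon_2}{2}-\epsilon_1$. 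Neither slip changes the final maximum.
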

\begin{proof}\renewcommand{\qed}{}
See \cref{proof:rl_ldp}.
\end{proof}

These bounds are tighter than the ones obtained with the basic theorem on sequential composition of DP mechanisms~\citep{mcsherry2009privacy}.
The tightness of the LDP bound is important to provide an uppper bound for the privacy of the mechanism, when comparing it with other mechanisms, as well as quantifying the privacy vs.\ utility tradeoff.

\section{Performance Evaluation}\label{sec:err_analysis}
To measure the privacy-induced error, we follow the LDP literature by treating the measurement as an estimator of $m_G$ under the randomness of the mechanisms.
A key metric of the quality of an estimator is its Mean Squared Error (MSE), as it captures the error due to both the estimator's variance and its bias.
By showing that our estimators are unbiased, we can compare their MSEs by simply comparing their variance.
Further, knowing the estimators' variance allows us to probabilistically bound the distance of a performance gap measurement to its true value as a function of the number of clients, which is informative to assess the feasibility of our mechanisms in a DFL setting.

The estimators that the operator should use to estimate $m_G$ from the perturbed group-performance tuples are as follows.

\begin{definition}[Estimators of $m_G$]
We define the following estimators for the mechanisms:
\begin{equation*}
\tilde{m}^{L}_G:= \frac{1}{a\,n}\sum_{j=1}^{n'} v'_j,\quad\quad\text{and}\quad\quad
\tilde{m}^{R}_G:= \frac{1}{a(2b-1)n}\sum_{j=1}^{n'} v'_j,
\end{equation*}
where $a=\frac{e^{\epsilon_1}}{e^{\epsilon_1}+d-1}$ and $b=\frac{e^{\epsilon_2}}{1+e^{\epsilon_2}}$, $n=|G|$, and $n'$ is the number of clients in group $G$ after the mechanism's perturbation.
\end{definition}

We have proven that the estimators are unbiased, and have obtained closed-form expressions of their variance (see \cref{proof:variances}), hence their MSE.

\begin{proposition}\label{prop:unbiasedness}
The estimators of the mechanisms are unbiased: $\E[\hat{m}^{L}_G]=\E[\hat{m}^{R}_G]=m_G$. 
\end{proposition}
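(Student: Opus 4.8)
The plan is to compute $\E[\tilde m^R_G]$ and $\E[\tilde m^L_G]$ directly, working conditionally on the randomness of the group-perturbation step, and then taking an outer expectation. Fix a group $G$ with $n = |G|$ and true members $v_1, \ldots, v_n \in [-1,1]$. The key structural observation is that $n'$, the post-perturbation size of $G$, and the set of surviving/incoming clients are random, so the sum $\sum_{j=1}^{n'} v'_j$ couples the group-flip randomness with the value-perturbation randomness. I would decompose this sum over \emph{all} $K$ clients via indicator variables: for client $k$, let $\one[g'_k = G]$ be the event that $k$ reports group $G$ after GRR, so that $\sum_{j=1}^{n'} v'_j = \sum_{k=1}^{K} \one[g'_k = G]\, v'_k$. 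By construction of $\Mr$ (line~2 of Algorithm~\ref{alg:mechanism_rr1}), a client who flips \emph{into} $G$ has its value reset to $0$, so those terms vanish and only the genuine members of $G$ who did \emph{not} flip contribute; hence $\sum_{j=1}^{n'} v'_j = \sum_{i \in G} \one[g'_i = G]\, v'_i$, where now $v'_i$ is the perturbed value built from the true $v_i$.

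Next I would handle the two layers of randomness for a fixed genuine member $i \in G$. The group stays put with probability $a = \frac{e^{\epsilon_1}}{e^{\epsilon_1}+d-1}$, i.e.\ $\Pr[\one[g'_i = G] = 1] = a$, and this event is independent of the value-perturbation randomness (the Bernoulli draw $B$ and the GRR on $B$). So $\E[\one[g'_i=G]\,v'_i] = a\,\E[v'_i]$. It then remains to show the value pipeline is unbiased for $v_i$: for $\Mr$, Harmony's discretization sets $B \sim \mathrm{Bernoulli}(\tfrac{1+v_i}{2})$ and then outputs $v'_i = 2\,\Mgr(B;2,b) - 1$ with $b = \frac{e^{\epsilon_2}}{1+e^{\epsilon_2}}$; a short computation gives $\E[\Mgr(B;2,b) \mid B] = bB + (1-b)(1-B) = (2b-1)B + (1-b)$, hence $\E[v'_i] = 2\big((2b-1)\tfrac{1+v_i}{2} + (1-b)\big) - 1 = (2b-1)v_i$. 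Therefore $\E\big[\sum_{j=1}^{n'} v'_j\big] = a(2b-1)\sum_{i\in G} v_i = a(2b-1)\,n\,m_G$, and dividing by $a(2b-1)n$ yields $\E[\tilde m^R_G] = m_G$. For $\Ml$ the argument is identical through the group layer, and the value layer replaces GRR-on-$B$ by adding mean-zero Laplace noise to $v_i$ (with a possibly group-dependent scale, but zero mean regardless), so $\E[v'_i] = v_i$ and the normalization $\frac{1}{an}$ gives $\E[\tilde m^L_G] = m_G$.

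The main obstacle is bookkeeping rather than any deep step: one must be careful that the normalizing factor uses the \emph{true} group size $n$ while the sum runs over the \emph{random} count $n'$, so the estimator is not a sample average of an i.i.d.\ batch — the unbiasedness hinges precisely on $\E[n'$-indexed sum$] = a(2b-1)n\,m_G$, which works only because (i) incoming clients contribute exactly $0$ by the line-2 reset, and (ii) the group-flip indicator is independent of the value-perturbation noise for each genuine member. I would state these two facts explicitly as the crux of the proof. A secondary subtlety worth a sentence is confirming independence across the two GRR invocations and the Bernoulli draw within a single client, which is immediate from Algorithm~\ref{alg:mechanism_rr1} since each uses fresh randomness.
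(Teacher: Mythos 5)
Your overall strategy---decomposing the post-perturbation sum over all $K$ clients with group-retention indicators, using the independence of the group-flip coin from the value-perturbation randomness, and checking that the value pipeline satisfies $\E[v'ـ]=(2b-1)v$ for $\Mr$ and $\E[v']=v$ for $\Ml$---is essentially the paper's proof, which models the genuine members' contributions as $B_i(v_i+Y_i)$ with $B_i\sim\mathrm{Bernoulli}(a)$ and the incoming members' contributions as $\bar{B}_j\bar{Y}_j$, and takes expectations term by term. (The paper writes out $\Ml$ and declares $\Mr$ analogous; you do the reverse. Either is fine.)

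However, one step as written is false. You claim that a client who flips into $G$ ``has its value reset to $0$, so those terms vanish,'' and hence that $\sum_{j=1}^{n'} v'_j = \sum_{i\in G}\one[g'_i=G]\,v'_i$ holds as an identity; you repeat this as crux (i). Line~2 of Algorithm~\ref{alg:mechanism_rr1} resets the \emph{input} to the value-perturbation stage, not its \emph{output}: for $\Mr$ an incoming client then draws $B\sim\mathrm{Bernoulli}(\tfrac{1}{2})$ and reports $v'\in\{-1,+1\}$, and for $\Ml$ it reports pure Laplace noise $\bar{Y}_j$. Neither is zero almost surely, so the incoming terms do not vanish pointwise---they vanish only in expectation, because $(2b-1)\cdot 0=0$ and $\E[\bar{Y}_j]=0$. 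Your final conclusion survives, since you only ever take expectations and the dropped terms have zero mean, but the identity must be stated at the level of expectations, as the paper does by keeping the incoming contributions $\bar{B}_j\bar{Y}_j$ explicit and invoking $\E[\bar{Y}_j]=0$. The distinction is not pedantic: the incoming clients' nonzero reported values are exactly what produces the $\frac{K-n}{n}$ terms in the variance expressions of \cref{proof:variances}, so treating those contributions as literally zero would give the wrong answer one proposition later.
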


\begin{proof}
\renewcommand{\qed}{}
See \cref{proof:unbiasedness}
\end{proof}

\cref{theo:variances} shows that the MSE of $\Delta \tilde{m}^*$ is the sum of the MSEs of the group mean value estimates.

\begin{theorem}\label{theo:variances}
$\Delta \tilde{m}^{*}$ is an unbiased estimator of $\Delta m$ for two groups $G,\bar{G}\in\Pa$ with MSE:
$$\text{MSE}\left[\Delta \tilde{m}^{*}\right] =  \text{MSE}[\tilde{m}^{*}_G]+\text{MSE}[\tilde{m}^{*}_{\bar{G}}].$$
\end{theorem}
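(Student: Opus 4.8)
The plan is to reduce the statement to a single covariance computation. Writing $\Delta\tilde m^{*} := \tilde m^{*}_G - \tilde m^{*}_{\bar G}$ (and reading $\Delta m$ with the matching orientation), unbiasedness is immediate from linearity of expectation together with \cref{prop:unbiasedness}: $\E[\Delta\tilde m^{*}] = \E[\tilde m^{*}_G] - \E[\tilde m^{*}_{\bar G}] = m_G - m_{\bar G}$. Since the estimator is then unbiased, $\MSE[\Delta\tilde m^{*}] = \Var[\Delta\tilde m^{*}] = \Var[\tilde m^{*}_G] + \Var[\tilde m^{*}_{\bar G}] - 2\,\Cov[\tilde m^{*}_G,\tilde m^{*}_{\bar G}]$, and likewise $\MSE[\tilde m^{*}_G]=\Var[\tilde m^{*}_G]$ and $\MSE[\tilde m^{*}_{\bar G}]=\Var[\tilde m^{*}_{\bar G}]$ by \cref{prop:unbiasedness}. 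So the whole theorem comes down to showing that the two group-mean estimators are uncorrelated: $\Cov[\tilde m^{*}_G,\tilde m^{*}_{\bar G}]=0$.

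To get there I would expand each estimator as a sum of per-client contributions over the whole client population. Pulling out the normalizing constant $c$ ($c=a$ for $\Ml$ and $c=a(2b-1)$ for $\Mr$), write $\tilde m^{*}_G = \frac{1}{c\,n}\sum_k X_k$ and $\tilde m^{*}_{\bar G} = \frac{1}{c\,\bar n}\sum_k Y_k$, where $X_k := \one[g'_k=G]\,v'_k$ and $Y_k := \one[g'_k=\bar G]\,v'_k$. Then $\Cov[\tilde m^{*}_G,\tilde m^{*}_{\bar G}] = \frac{1}{c^{2}n\bar n}\sum_{k,\ell}\Cov[X_k,Y_\ell]$, and because the mechanism is applied to each client independently, every cross term with $k\neq\ell$ vanishes; it remains to show $\Cov[X_k,Y_k]=0$ for each client $k$.

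For a fixed client this follows from two observations. First, $\{g'_k=G\}$ and $\{g'_k=\bar G\}$ are disjoint events, so $X_kY_k\equiv 0$ and hence $\E[X_kY_k]=0$. Second, at most one of $\E[X_k]$, $\E[Y_k]$ is nonzero: if client $k$'s true group is not $G$, then on $\{g'_k=G\}$ the mechanism has already overwritten the performance value with $0$ (line 2 of \cref{alg:mechanism_rr1}) and the value-perturbation step is mean-preserving --- Harmony's discretization followed by GRR is unbiased for the expectation, and the Laplace step is trivially mean-zero --- so $\E[X_k]=0$; symmetrically $\E[Y_k]=0$ when $k$'s true group is not $\bar G$; and since $G\neq\bar G$ no client belongs to both, so $\E[X_k]\,\E[Y_k]=0$ in every case. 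Thus $\Cov[X_k,Y_k]=\E[X_kY_k]-\E[X_k]\E[Y_k]=0$, giving $\Cov[\tilde m^{*}_G,\tilde m^{*}_{\bar G}]=0$ and therefore $\MSE[\Delta\tilde m^{*}]=\MSE[\tilde m^{*}_G]+\MSE[\tilde m^{*}_{\bar G}]$, with the closed-form variances supplied by \cref{proof:variances}.

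I expect the covariance step to be the genuine obstacle, and the reason the result is not a one-line corollary of the single-group analysis: the two estimators are \emph{not} independent, since the shared group-perturbation outcome $g'_k$ couples them --- observing that a client's value landed in group $G$ deterministically forces its $\bar G$-contribution to $0$. The point is that this dependence is still covariance-free, and what makes it so is the disjoint-support identity $X_kY_k\equiv 0$ combined with the deliberate zeroing of the performance values of clients who flip groups; without that zeroing, flipped clients would carry a nonzero mean into the other group's sum and the cancellation would fail.
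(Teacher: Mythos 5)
Your proposal is correct and follows essentially the same route as the paper's proof outline: unbiasedness via linearity of expectation and \cref{prop:unbiasedness}, then reducing the MSE identity to $\Cov[\tilde m^{*}_G,\tilde m^{*}_{\bar G}]=0$ and invoking Bienaym\'e; your per-client decomposition with the disjoint-support identity $X_kY_k\equiv 0$ and the zeroed means of flipped clients is just a cleaner, more explicit articulation of the paper's remark that the cross-expectation terms vanish because they carry a mean-zero perturbation factor. (Both you and the paper quietly treat $\Delta\tilde m^{*}$ as the signed difference rather than the absolute value, so you are no less rigorous than the original on that point.)
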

\begin{proof}
See \cref{proof:theo_variances}.
\end{proof}

The intuition behind \cref{theo:variances} is that even though $\tilde{m}^*_G$ and $\tilde{m}^*_{\bar{G}}$ are not independent, the errors are uncorrelated, and thus they add up.
Therefore, we can obtain a closed-form expression of the MSE of $\Delta m$ by adding the closed-form expressions of the group MSEs. 


\paragraph{MSE for a fixed privacy budget}\label{sec:comparison}
As shown by \cref{theo:rr_ldp} and \cref{theo:rl_ldp}, the privacy budgets $\epsilon_1$ and $\epsilon_2$ have a different impact on the LDP bound of the mechanism. To draw a fair comparison between the mechanisms, we need to find the parameters that minimize the error on utility for a fixed overall privacy budget $\epsilon$.
With the closed-form expression of the MSE of $\Delta \tilde{m}^*$, we can compare the estimators for specific values of $\epsilon_1$ and $\epsilon_2$.
It is unclear a priori how to divide a fixed privacy budget into the mechanism's group and performance components to maximize utility.
Our approach is to find an allocation that minimizes the MSE of the estimators, for the total privacy budget of the mechanism ($\epsilon$).

For $\Mr$, the optimal allocation is $\epsilon_1=\epsilon_2=\epsilon$; for $\Ml$ with $k=2$, it is $\epsilon_2=\epsilon$, and $\epsilon_1=\frac{\epsilon}{2}$. In \cref{app:details}, we find the optimal allocation for $\Ml$ with $k$ as a parameter.

\begin{figure}
\centering
\begin{minipage}{.4\textwidth}
\centering
    \includegraphics[scale=0.99]{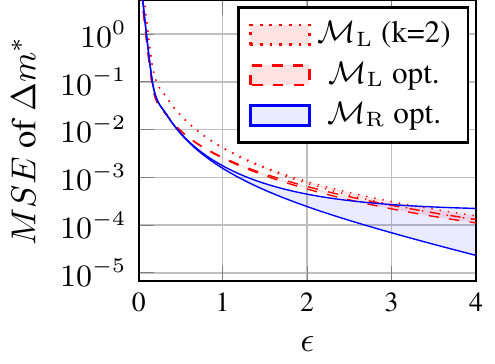}
    \caption{Upper and lower bounds of estimator MSE for different privacy budgets $\epsilon$. We have set $n_G=n_{\bar{G}}=10^{4}$.}
    \label{fig:sum_MSE}
\end{minipage}%
\hfill
\begin{minipage}{.56\textwidth}
  \centering
    \includegraphics[scale=0.93]{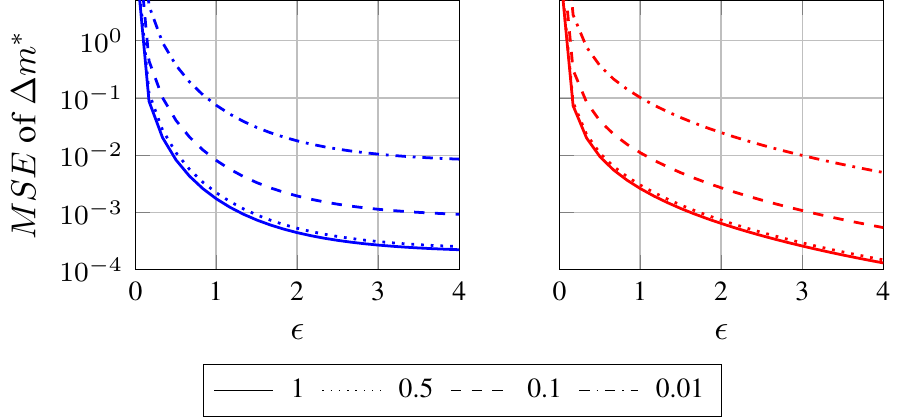}
    \caption{Upper bound of the MSE of $\Mr$ opt.\ (left) and $\Ml$ with $k=2$ (right) for $K=2\cdot 10^{4}$ and different group ratios $n_G/n_{\bar{G}}$.}
    \label{fig:MSE_different_ratio}
\end{minipage}
\end{figure}

\paragraph{Comparison of the mechanisms.}
In \cref{fig:sum_MSE}, we plot the MSE of the performance gap estimator for two groups $G$ and $\bar{G}$ of the same size.
$\Ml$ opt.\  and $\Mr$ opt.\ are the mechanisms with optimal parameters. Since the specific set of $v_k$'s has an impact on the MSE, in the graph we show the lower and upper bounds for each mechanism, which enclose a (colored) region of MSEs.

We observe that $\Ml$ opt.\ achieves lower MSE than $\Ml$ with $k=2$ for the range of $\epsilon$ that we consider.
$\Mr$ opt.'s MSE is lower than $\Ml$ opt.\ only when $0.31 \lessapprox \epsilon \lessapprox 2.6$ but, for $\epsilon \gtrapprox 2.6$,  the upper bound of $\Mr$ opt.'s MSE is larger than the upper bound of $\Ml$ opt.'s and the gap between the two rapidly widens for larger privacy budgets.
As a consequence, there is no overall best mechanism and the operator should select the one that best suits their budget.

\paragraph{Unbalanced Groups}\label{sec:balanced}
We also study the impact that an unbalance of group sizes has on the MSE of the estimators.
\cref{fig:MSE_different_ratio} depicts the upper bounds of the MSEs, $\Mr$ opt.\ (left) and $\Ml$ with $k=2$\ (right), for two groups with different size ratios.
We observe that the mechanisms can cope with relatively small groups, but the MSE rapidly grows for minorities that account for less than 1\% of the clients.
Consequently, the mechanisms would incur in high MSE with protected attributes that include small minorities, but would maintain low MSEs for protected attributes like gender and race.

We have implicitly assumed that the entity performing the measurements knows $n$. \cref{fig:MSE_different_ratio} also quantifies the difference of the mechanisms' MSE when the operator incorrectly estimates the group fractions.
For example, when the groups are balanced and the estimate of the group fractions has a relative error of up to 50\%, the difference in the MSE values (i.e., the difference between the bold and dotted lines) is insignificant. This means that the assumption of knowing $n$ can be relaxed in practice to a large extent.

\paragraph{Precision Relative to the Number of Clients}\label{sec:conf_int}
Our mechanisms allow for high-precision measurements with realistic numbers of clients in the DFL setting. Using \cref{theo:variances}, we obtain a Chebyshev concentration bound and numerically find the privacy budget $\epsilon$ such that $|\Delta \tilde{m}_G^*-\Delta m| < \alpha$, for a small $\alpha>0$, with high probability.

\begin{table}[t]
    \centering
    \caption{Minimum privacy budget ($\epsilon$) required to bound the error by $\alpha$, given $K$ clients, with 0.99 probability. Highlighted are the $\epsilon$'s that are considered reasonable.
    }
    \label{tab:epsilon_tradeoff}
\resizebox{0.8\columnwidth}{!}{
        \begin{tabular}{ccccccc} 
        \toprule 
        \multicolumn{1}{c}{} & \multicolumn{3}{c}{$\boldsymbol{\Mr}$ opt.} & \multicolumn{3}{c}{$\boldsymbol{\Ml}$ opt.} \\
        \cmidrule(lr){2-4} \cmidrule(lr){5-7}
        $\boldsymbol{K}$ & $\alpha=10^{-1}$ & $\alpha=10^{-2}$ & $\alpha=10^{-3}$ & $\alpha=10^{-1}$ & $\alpha=10^{-2}$ & $\alpha=10^{-3}$\\ 
        \cmidrule(lr){1-7}
         $10^5$ &           \cellcolor{green!25}1.86 & - & - &           \cellcolor{green!25}2.56 & 17.89 & 178.89 \\
         $10^6$ &           \cellcolor{green!25}0.63 & - & - &           \cellcolor{green!25}0.71 &  6.32 &  56.57 \\
         $10^7$ &           \cellcolor{green!25}0.23 &  \cellcolor{green!25}1.86 & - &           \cellcolor{green!25}0.21 &  \cellcolor{green!25}2.56 &  17.89 \\
         $10^8$ &           \cellcolor{green!25}0.08 &  \cellcolor{green!25}0.63 & - &           \cellcolor{green!25}0.07 &  \cellcolor{green!25}0.71 &   6.32 \\
         $10^9$ &           \cellcolor{green!25}0.02 &  \cellcolor{green!25}0.23 &  \cellcolor{green!25}1.86 &           \cellcolor{green!25}0.02 &  \cellcolor{green!25}0.21 &   \cellcolor{green!25}2.56 \\
        \bottomrule 
        \end{tabular}
    }
\end{table}

\cref{tab:epsilon_tradeoff} shows the minimum privacy budget required to ensure that the error is at most $\alpha$ for various values of $\alpha$ and numbers of clients, with probability $0.99$.
If the operator can afford a higher privacy budget, the bound would still hold but if their privacy budget is lower, the mechanism does not guarantee the bounds.

Due to $\Mr$'s discretization step, the maximum estimator variance (attained when all the performance values are zero) tends to a constant (indirectly proportional to group size). Thus, there is no budget that allows to achieve $\alpha$ for those cells marked with ``-''.

These results show that the required budgets to achieve an error of less than one percentage point and less of one tenth of a percentage point are reasonable for $K\geq 10^7$ and $K \geq 10^9$ clients, respectively.
Even though these may look like large numbers of clients, current DFL deployments have this many, and even more, clients.
For example, in 2018 Apple reported a total of half a billion active Siri clients~\citep{applesiri2018} and, in the same year, Gboard surpassed 1 billion installs~\citep{googlegboard2018}.

We have also evaluated these bounds on a real-world dataset of performances of a simulated FL deployment (\cref{sec:emp_evaluation}). Our results show that the bounds not only hold but that they are overly conservative: in practice, the operators of the mechanisms would be able to ensure the same privacy level by spending less privacy budget. The Chebyshev bounds are known to be loose because they do not make assumptions about the underlying distributions; we leave finding tighter bounds for future work.

\section{Related Work}
In the ML literature, \citet{veale2017fairer} first noted the legal, institutional, and commercial deterrents against collecting demographic data.
To address the lack of demographic data, they envisioned privacy-preserving protocols that rely on a third-party to detect and mitigate discrimination.

Researchers materialized these protocols with DP and Secure Multi-Party Computation (SMPC)~\citep{jagielski2019differentially, kilbertus2018blind,metatech}.
\citet{jagielski2019differentially} proposed DP versions of existing post-\ and in-processing techniques to train classifiers that satisfy the Equalized Odds constraint.
In contrast, our work defines the notion of performance gap as it is more suitable for the DFL setting.
In addition, as most of related work, their focus is on mitigating the disparity assuming that it has occurred, while we focus on the measurement of the disparity rather than its mitigation.

In contrast to SMPC, our mechanisms have low computational and bandwidth costs, and are robust to client dropouts.
Moreover, SMPC and DP provide different privacy guarantees; in particular, SMPC does not limit the information about individual group membership that the aggregator can infer from the measurements.

Another difference between our work and the prior works is the involvement of the model holder towards the goal of identifying disparities. Depending on how DFL is implemented, our approach may allow the clients and the mechanism operator to measure the performance gap without the aggregator's collaboration.

Prior work on LDP mechanisms to protect sensitive attributes is too extensive to be covered in detail. Recent work has made progress on designing mechanisms for private mean estimation on the theoretical~\citep{nguyen2016collecting,asi2022optimal} and practical fronts~\citep{gu2020pckv,ye2019privkv}. However, this literature does not consider the perturbation of performance values for a performance gap measurement, and therefore, is focused on slightly different privacy vs.\ utility tradeoffs than we are.

\section{Discussion and Conclusion}\label{sec:conclusion}
With FL gaining traction in industry and academia, there is a growing concern that models trained with it will exhibit disparate performance across demographic groups, leading to harms ranging from a mere inconvenience to disparate impact, such as increased surveillance and lower online security for some of the groups.
We propose considering the performance gap between demographic groups as a notion of (un)fairness in the DFL setting, and argue that the ability to measure it is crucial towards addressing such harms.
However, especially under the privacy aspirations of federated learning, lack of demographic data hinders the applicability of existing techniques to measure performance disparities in DFL models.
This poses an obstacle to mitigating the harms; as Roy \citet{austin2021race}, Facebook's VP of Civil Rights, puts it: ``we can't address what we can't measure.''

To address the legal, societal, and individual concerns related to the privacy of demographic data, we propose locally differentially private mechanisms that estimate the performance gap while protecting the privacy of the group membership and potentially correlated data such as model performance.
Our theoretical and experimental results show that the mechanisms ensure strong privacy guarantees while performing relatively precise performance gap measurements when relying on realistic numbers of clients in the DFL setting and reasonable privacy parameters.
Our insight is that the large scale of existing DFL deployments offers a unique opportunity to  measure and expose the potential disparities while guaranteeing strong privacy to the participants.

\acks{We thank the anonymous reviewers for their helpful comments. We are also grateful to Shuang Wu, Nedelina Teneva, and Basileal Imana for their valuable input during the development of this work.
This work has been supported in part by USC + Amazon Center on Secure \& Trusted ML, and NSF Awards \#1943584, \#1916153, and \#1956435.}

\bibliography{main.bib}

\appendix
\pagebreak
\section{Proofs}\label{sec:appendix_proofs}
\subsection{Proof of Theorem~\ref{theo:rr_ldp}}\label{proof:rr_ldp}
\begin{proof}

We denote $a=\frac{e^{\epsilon_1}}{e^{\epsilon_1} + d - 1}$ and $b=\frac{e^{\epsilon_2}}{1+e^{\epsilon_2}}$.
Let $x_0=(g_0,\ v_0)$ and $x_1=(g_1,\  v_1)$ be two different inputs and $y=(g',\ v')$ be an output of the mechanism. From the mechanism's definition, we have that for an arbitrary input $x=(g,\ v)$,
\[
\Pr[y\mid x]\;=\;
 \begin{cases}
  \frac{a(1+(2b-1)v'v)}{2} & \text{if } g'=g\\[0.7em]
  \frac{1-a}{2(d-1)} & \text{if } g'\neq g \\
 \end{cases}
\]

We prove it for $d=2$ as that is what we use in most of our evaluation, and leave the case $d>2$ for future work.

Since $v\in [-1, 1]$ and $v'\in\{-1, 1\}$, an upper bound of  $\Pr[y\mid x]$ when $g'=g$ is
\begin{equation}\label{eq:upper}
  \Pr[y\mid x]\leq ab
\end{equation}
and a lower bound is
\begin{equation}\label{eq:lower}
   \Pr[y\mid x]\geq a(1-b)
\end{equation}

Now, we bound $\Pr[y\mid x_0]/\Pr[y\mid x_1]$, where $x_0$ and $x_1$ differ in either group or value.
If they have the same group but may (or may not) differ in value, we consider two cases: $g'=g$ and $g'\neq g$ (where $g=g_0=g_1$).

\paragraph{Case 1: $g'=g$.}

Using the upper and lower bounds, we obtain:
\begin{equation}\label{eq:value1}
    \frac{\Pr[y\mid x_0]}{\Pr[y\mid x_1]}\leq \frac{ab}{a(1-b)}=e^{\epsilon_2}
\end{equation}

\paragraph{Case 2: $g'\neq g$.}
Using the probability of $\Pr[y\mid x_1]$ when $g'\neq g$:
\begin{equation}\label{eq:value2}
    \frac{\Pr[y\mid x_0]}{\Pr[y\mid x_1]}=1\leq e^{\epsilon_2}, \text{ as } \epsilon_2\in [0, +\infty)
\end{equation}

This shows that if the inputs have the same group, the differential privacy guarantee boils down to the guarantee of the value-perturbing GRR mechanism.

If $x_0$ and $x_1$ differ in group, we again break down the analysis into two cases: $g'=g_0\neq g_1$ and $g'=g_1\neq g_0$.

\paragraph{Case 1: $g'=g_0\neq g_1$.}
Using the upper bound and taking $e_2=0$ as the minimum value for the denominator, we obtain:
\begin{equation}\label{eq:g=g'}
    \frac{\Pr[y\mid x_0]}{\Pr[y\mid x_1]}\leq \frac{2ab}{1-a}=\frac{2e^{\epsilon_2+\epsilon_1}}{1+e^{\epsilon_2}}\leq e^{\epsilon_1}
\end{equation}

\paragraph{Case 2: $g'=g_1\neq g_0$.}
Using the lower bound and that $1\leq e^{\epsilon_2}$, we have:
\begin{equation}\label{eq:gnotg'}
    \frac{\Pr[y\mid x_0]}{\Pr[y\mid x_1]}\leq \frac{1-a}{2a(1-b)}=\frac{1+e^{\epsilon_2}}{2e^{\epsilon_1}}\leq \frac{2e^{\epsilon_2}}{2e^{\epsilon_1}} =  e^{\epsilon_2-\epsilon_1}
\end{equation}

\noindent
Combining the equations above, we conclude that $\Mr$ is $\epsilon$-DP with $\epsilon=\max\{\epsilon_1,\ \epsilon_2,\ \epsilon_2 - \epsilon_1\} = \max\{\epsilon_1,\epsilon_2\}$ and, thus, the optimal budget allocation is $\epsilon_1=\epsilon_2=\epsilon$.
\end{proof}

\subsection{Proof of Theorem~\ref{theo:rl_ldp}}\label{proof:rl_ldp}
\begin{proof}
This proof is for $k=2$. Let $x_0=(g_0,\ v_0)$ and $x_1=(g_1,\ v_1)$ be two different inputs and $y=(g',\ v')$ be an output of the mechanism. Because $\Ml$ perturbs the values with Laplacian noise, we have that for an arbitrary input $x=(g,\ v)$,
\[
\Pr[y\mid x]\;=\;\begin{cases}
  \frac{e^{\epsilon_1}}{e^{\epsilon_1}+d -1} \;f_{\mathrm{Lap}(0, \frac{2}{\epsilon_2})}(v'- v) & \text{if } g'=g\\[0.7em]
  \frac{1}{e^{\epsilon_1}+d -1} \;f_{\mathrm{Lap}(0, \frac{2}{\epsilon_2})}(v') & \text{if } g'\neq g \\
 \end{cases}
\]

This is because when the mechanism preserves the group, $v'= v + Y$ where $Y\sim \mathrm{Lap}(0, \frac{2}{\epsilon_2})$, hence the probability of the new value is the probability of sampling $v'- v$ from the Laplace distribution with zero mean and scale of $\frac{2}{\epsilon_2}$. When the group is flipped, the mechanism sets $v$ to zero therefore in that case it is the probability of sampling $v'$ from $\mathrm{Lap}(0, \frac{2}{\epsilon_2})$.

As in the proof of \cref{theo:rr_ldp}, we follow a case-based reasoning. If $x_0$ and $x_1$ have the same group but differ in value, we consider two cases: $g'=g$ and $g'\neq g$.

\paragraph{Case 1: $g'= g$.}
\begin{equation}
    \frac{\Pr[y\mid x_0]}{\Pr[y\mid x_1]} = \frac{ f_{\mathrm{Lap}(0, \frac{2}{\epsilon_2})}(v'- v_0)}{ f_{\mathrm{Lap}(0, \frac{2}{\epsilon_2})}(v'-v_1) } = e^{\epsilon_2(\frac{|v'-v_1|}{2}-\frac{|v'-v_0|}{2})} \leq e^{\epsilon_2}
\end{equation}

\paragraph{Case 2: $g'\neq g$.}
\begin{equation}
    \frac{\Pr[y\mid x_0]}{\Pr[y\mid x_1]} = \frac{ f_{\mathrm{Lap}(0, \frac{2}{\epsilon_2})}(v')}{ f_{\mathrm{Lap}(0, \frac{2}{\epsilon_2})}(v') } = 1
\end{equation}

If $x_0$ and $x_1$ differ in group, we again consider two cases: $g'=g_0\neq g_1$ and $g'=g_1\neq g_0$.

\paragraph{Case 1: $g'= g_0 \neq g_1$.}

$$
\frac{\Pr[y\mid x_0]}{\Pr[y\mid x_1]} = \frac{\frac{e^{\epsilon_1}}{e^{\epsilon_1}+d -1} f_{\mathrm{Lap}(0, \frac{2}{\epsilon_2})}(v'- v_0)}{ \frac{1}{e^{\epsilon_1}+d-1} f_{\mathrm{Lap}(0, \frac{2}{\epsilon_2})}(v') } \\
=  e^{\epsilon_1 + \epsilon_2 (\frac{|v'|}{2}-\frac{|v'-v_0|}{2})} \\
\leq e^{\epsilon_1+\frac{\epsilon_2}{2}}
$$

The last inequality follows from $\frac{|v'|}{2}-\frac{|v'-v_0|}{2} \leq \frac{1}{2}$.

\paragraph{Case 2: $g'=g_1\neq g_0$.}
\begin{equation}
    \frac{\Pr[y\mid x_0]}{\Pr[y\mid x_1]} =   e^{\epsilon_2 (\frac{|v'-v_1|}{2}-\frac{|v'|}{2})-\epsilon_1}\leq e^{\frac{\epsilon_2}{2}-\epsilon_1}
\end{equation}

The last inequality follows from the triangle inequality: $\frac{|v'-v_1|}{2}-\frac{|v'|}{2} \leq \frac{|v_1|}{2} \leq \frac{1}{2}$.

Finally, combining all the inequalities above, we obtain the $\epsilon$ in the bound of the probability ratio

\[
\epsilon = \max\left\{ \epsilon_2,\
\frac{\epsilon_2}{2} - \epsilon_1,\
\frac{\epsilon_2}{2} + \epsilon_1 
     \right\} = \max\left\{ \epsilon_2,
\frac{\epsilon_2}{2} + \epsilon_1 
     \right\}
\]
\end{proof}

Thus, the optimal budget allocation for mechanism $\Ml$ with $k=2$ is $\epsilon_2=\epsilon$ and $\epsilon_1 = \frac{\epsilon}{2}$.

\subsection{Proof of Theorem~\ref{prop:unbiasedness}}\label{proof:unbiasedness}
\begin{proof}
We prove that $\hat{m}_G^L$ is unbiased. The proof for the unbiasedness of $\hat{m}_G^R$ is analogous.

We model the values in $G$ after applying $\Ml$ with the following mutually independent random variables
\begin{align}
    V_i = B_i(v_i+Y_i), &\quad i=1,\ldots,\ n,\\[0.7em]
    \bar{V}_j = \bar{B}_j(0+\bar{Y}_j) = \bar{B}_j\bar{Y}_j, &\quad j=1,\ldots,\ K-n
\end{align}
where $V_i$ and $\bar{V}_j$ are the final, perturbed values in group $G$ that originate from group $G$ and $\bar{G}$, respectively. In our notation, the bar denotes that the random variable relates to group $\bar{G}$, the complement of $G$.
The random variables $B_i\sim \mathrm{Bernoulli}(a)$ and $\bar{B}_j\sim \mathrm{Bernoulli}(1-a)$ model $\Mrr$, and $Y_i\sim \mathrm{Lap}(0, 2/\epsilon_2)$ and $\bar{Y}_j\sim \mathrm{Lap}(0, k/\epsilon_2)$ model $\Mlap$.
Thus, the expected value of the estimator is
\begin{small}
\begin{align}
\E[\hat{m}_G^L] \;=\; & \frac{1}{a\, n}\left(\sum_{i=1}^{n}\E\left[V_i\right]+\sum_{j=1}^{K-n}\E[\bar{V}_j]\right)&& \text{\small{Linearity of $\E$}}\\[0.7em]
\;=\; & \frac{1}{a\, n}\sum_{i=1}^{n}\E[B_i(v_i+Y_i)]&& \text{\small{$\E[\bar{V}_j]=0$}}\\[0.7em]
\;=\; & \frac{1}{a\, n}\sum_{i=1}^{n}\E[B_i](v_i+\E[Y_i])&& \text{\small{Mutual independence}}\\[0.7em]
\;=\; & \frac{1}{a\, n}\sum_{i=1}^{n}\E[B_i]v_i&& \text{\small{$\E[Y_i]=0$}}\\[0.7em]
\;=\; & \frac{a}{a\, n}\sum_{i=1}^{n} v_i&& \text{\small{$\E[B_i]=a$}}\\[0.7em]
\;=\; & m_G    
\end{align}
\end{small}
We used that $\E[\bar{V}_j]=0$ because $\E[\bar{Y}_j]=0$ and that the random variables are mutually independent.
\end{proof}

\subsection{Closed-form expressions of Variance}\label{proof:variances}
Using the probabilistic model defined in \cref{proof:unbiasedness}, we can write the variance of the estimator $\hat{m}_G^L$ as
$$\Var[\hat{m}_G^L] = \frac{1}{a^2n^2}\Var\left[\sum_{i=1}^{n} (v_i + Y_i)B_i + \sum_{j=1}^{K-n} \bar{Y}_j \bar{B}_j\right].$$
Note that the noise terms have positive variance and therefore do not cancel out. 
We can use the fact that the variables are mutually independent to write the variance of the sum as the sum of variances. We will then obtain variances of products and will use the well-known formula for the variance of the product of two independent random variables. Rearranging the terms gives the closed expression of the variance:
\begin{equation}\label{eq:var_vl}
\Var[\hat{m}_G^{L}] = \frac{1}{n}\left(\nu^2 e^{-\epsilon_1} + \left(1+e^{-\epsilon_1}\right)\left(\sigma^2_L + \frac{K-n}{n}{\bar{\sigma}^2_L}e^{-\epsilon_1}\right) \right)
\end{equation}
where $\nu^2 = \frac{1}{n}\sum_{i=1}^{n} v_i^2$, and $\sigma^2_L, {\bar{\sigma}^2_L}$ are the variances of the Laplace noise distributions (functions of $\epsilon_2$), for clients who do not swap and those who do, respectively.
The lower and upper bounds shown in \cref{fig:sum_MSE} are taken using that $0\leq \nu^2\leq 1$.

The closed-form expression of $\hat{m}_G^R$'s variance can be obtained similarly, and is 

\begin{equation}
    \Var[\hat{m}_G^{R}]=\frac{1}{a(2b-1)^2n}\left(1 - a(2b-1)^2\nu^2 + \frac{K-n}{n}\frac{1-a}{a}\right)
\end{equation}

Recall that $a$ and $b$ are functions of the privacy budgets.

\subsection{Proof outline for Theorem~\ref{theo:variances}}\label{proof:theo_variances}
First, we prove the unbiasedness of $\Delta \hat{m}^{*}$. Due to \cref{prop:unbiasedness} and the linearity of expectation, the expected value of $\Delta \hat{m}^*$ is $\Delta m$. Assuming that $G$ is the advantaged group and thus $\hat{m}^*_G \geq \hat{m}^*_{\bar{G}}$, we have that
$\E[|\hat{m}^*_G - \hat{m}^*_{\bar{G}}|]=|m_G - m_{\bar{G}}|$.

To show that the variance of $\Delta \hat{m}^*$ is the sum of the variance of the mean group value estimators, it suffices to show that $Cov(\hat{m}^*_G, \hat{m}^*_{\bar{G}})=0$, which is true if, and only if, $\E[\hat{m}^*_G \hat{m}^*_{\bar{G}}]=m_G\,m_{\bar{G}}$. Calculating the value of that expectation explicitly, we observe that many of its terms have an independent Laplace r.v.\ as a factor and, consequently, these terms are zero.
Finally, we can apply Bienaym\'e's identity to obtain the result of the theorem.

The proof for $\Mr$ is similar, as the expected value of clients with the group perturbed is zero.

\section{Allocating the privacy budget for the \texorpdfstring{$\Ml$}{GRR+Laplace} mechanism}\label{app:details}
In \cref{eq:var_vl}, we see that the variance of the unbiased estimator for $\Ml$ is dominated by $\epsilon_2$.
Therefore, since $\epsilon_1, \epsilon_2$, and $k$ must satisfy \cref{eq:bound_vl}, we minimize the MSE by first setting $\epsilon_2=\epsilon$ and, then, finding the $k$ that maximizes $\epsilon_1$ under the LDP constraint in \cref{eq:bound_vl}.

If we take $\epsilon_2=\epsilon$ in \cref{eq:bound_vl} of \cref{theo:rl_ldp}, we obtain bounds for $\epsilon_1$
\begin{equation}\label{eq:bound_e1}
    \ln(\frac{2}{k}) -\frac{\epsilon}{2}\leq \epsilon_1 \leq \ln(\frac{2}{k})+\frac{\epsilon}{2}\,\lambda(k),
\end{equation}
where $\lambda(k) = 2\left(1-\frac{1}{k}\right)$. Thus, this inequality holds iff $\frac{2}{3}\leq k$.\looseness-1

To find the $k$ that maximizes $\epsilon_1$, we consider two cases: $0 <\epsilon < 2/3$, and $2/3 \leq \epsilon$. 
\begin{figure}[t]
\centering
\resizebox{\columnwidth}{!}{
   \includegraphics[scale=0.04]{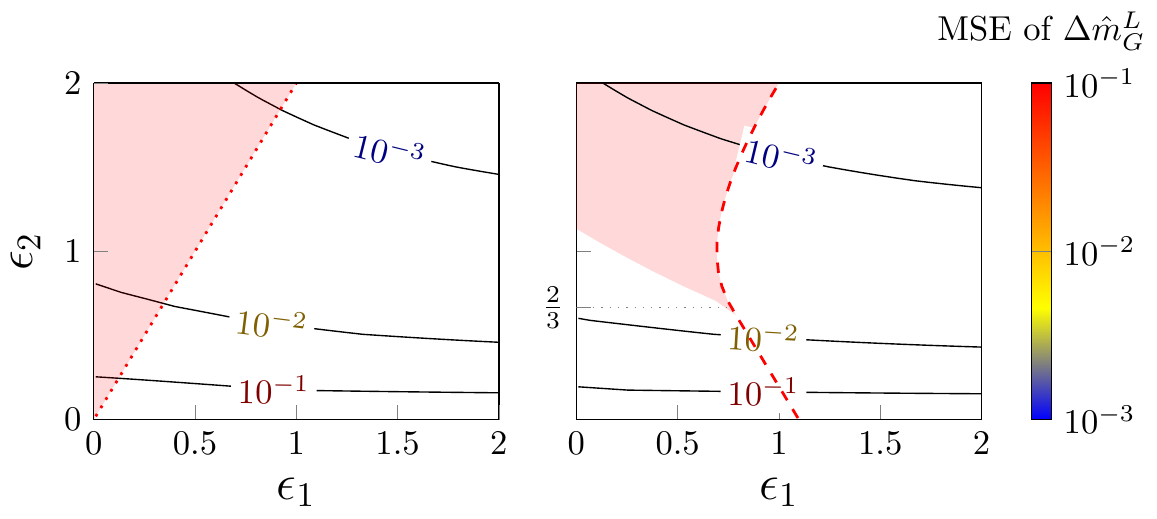}
}
    \caption{Contour plot of the MSE of $\Delta \hat{m}^L$ for $k=2$ (left) and $k=\frac{2}{3}$ (right), as a function of $\epsilon_1$ and $\epsilon_2$. The colored area is the region where the parameters satisfy $\epsilon$-LDP for $\epsilon=\epsilon_2$. The curves represent the optimal allocations when $k=2$ (dotted) and $k=\frac{2}{3}$ (dashed).}
    \label{fig:contour_plot}
\end{figure}

If $2/3 \leq \epsilon$, we write $\epsilon_1$ as the upper bound of $\epsilon$ in \cref{eq:bound_e1}, a function of $k$, and find that $k=\epsilon$ is a maximum for a constant $\epsilon$.
However, for $0<k< 2/3$, \cref{eq:bound_e1} does not hold and hence $k=\epsilon$ would not satisfy $\epsilon$-LDP. When $0<\epsilon < 2/3$, we take $k=2/3$, the minimum $k$ that satisfies $\epsilon$-LDP, as that minimizes the scale of the Laplace noise. In that case, $\epsilon_1$ is equal to the upper and lower bounds in \cref{eq:bound_e1}.

Thus, the maximum $\epsilon_1$ as a function of $\epsilon$ is

\[
\epsilon_1 \;=\;
 \begin{cases}
  \ln(\frac{2}{\epsilon})+\epsilon-1 & \text{if } \frac{2}{3} \leq \epsilon\\[0.7em]
  \ln(3)-\frac{\epsilon}{2} & \text{if } 0 < \epsilon < \frac{2}{3}
 \end{cases}
\]

\cref{fig:contour_plot} shows the allocations of the privacy budgets that satisfy the LDP constraint (colored area).
The dashed and dotted borders of the areas show the allocations that minimize the MSE for a total privacy budget of $\epsilon=\epsilon_2\in(0,2]$ for $k=2$ and $k$=2/3, respectively.
A closer look at the MSE contour lines reveals that the mechanism with $k=2/3$ achieves lower MSE values than for $k=2$ when $\epsilon < 2/3$.

\section{Empirical Validation}\label{app:validation}
We have run experiments to validate the correctness of our expressions of the variance of the estimators.
In the experiments, we initialize two groups with 10 clients each with fixed performance values. Then, we run the mechanisms a number of times to obtain sets of perturbed tuples and calculate the performance gap estimates.
The empirical MSE is the average of the squared differences between these estimates and the true performance gap.
We plot the empirical and theoretical MSE for mechanism $\Mr$ in \cref{fig:theo_vs_emp}. We observe that, as we increase the number of runs, the empirical MSE converges to the theoretical MSE, validating our results.

\begin{figure}[ht]
    \centering
\includegraphics[]{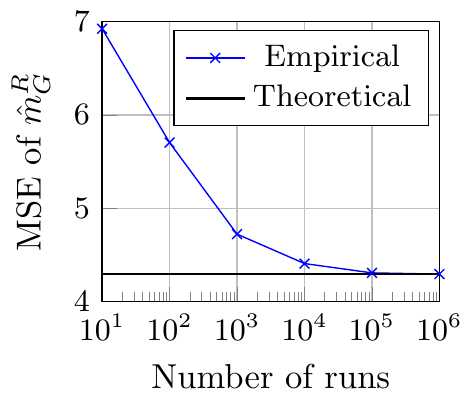}
    \caption{The theoretical upper bound of the MSE of $\hat{m}_G^R$ as derived from \cref{theo:variances}, and its empirical MSE over different runs of $\Mr$, for $n_G=n_{\bar{G}}=10$.}
    \label{fig:theo_vs_emp}
\end{figure}

The source code for reproducing these experiments is publicly available~\citep{Juarez22}.

\section{Empirical Evaluation}\label{sec:emp_evaluation}
We now describe the experiments to evaluate the error of the mechanisms.
Since we are not aware of public datasets with sufficient data to model a real-world deployment of DFL, we  synthesize a dataset by fitting the marginal probability distributions of the protected attribute on a real-world dataset.
Our results show that the error of the mechanisms in the synthetic data is orders of magnitude lower than the Chebyshev bounds obtained in the previous section, indicating that an operator who uses the Chebyshev bounds might be overly conservative in their privacy risk assessment.

\paragraph{Data Generation}
Our data generation model is based on the activity detection dataset collected by~\citet{shinmoto2016effectiveness}.
The dataset comprises the sensor readings for 14 subjects who were instructed to perform a number of scripted daily activities in two different rooms.
The features include the sensor's readings of time, accelerometer position, and radio signal's strength, frequency, and phase.
The labels describe one of these activities: sitting, lying down, or ambulating. We binarized the detection task by relabeling the data to whether or not the subject was lying down.

We define ``sex'' as the protected attribute in the data.
Although the sex of the subject was annotated per each trial---25 male and 62 female---there is no mapping between trials and subjects.
Thus, we assume that each recorded session represents a different FL client, with each client having an average of 864 samples.
We stratify the data ensuring that all clients have the same data distribution between training and test sets (70\% of the samples for training and 30\% for testing).

We simulated the federated learning of a model by training a logistic regression model. We assume that this is the global model trained with the data of all clients.
Since the performance of the model was nearly perfect, resulting in almost all the clients having a zero false positive rate, we have dropped some of the accelerometer features to increase the difficulty of the learning task.
The global model's hold-out average test accuracy for 10 runs is 84.37\%, with a false positive rate (FPR) of 10.69\%, and a true positive rate (TPR) of 82.05\% (all SD values are smaller than 1\%).
Then, we independently test the global model on each client's test set, resulting in two performance values for each client.
We take the TPR and the FPR as performance metrics: the mean TPRs are $89.01$\% and $71.77$\% and the mean FPRs are $15.26$\% and $24.90$\% for males and females, respectively.
We observe a significant performance gap on both metrics: $\Delta$TPR $=17.33\%$ and $\Delta$FPR $=9.63\%$.\looseness-1

\begin{table}[htbp]
    \centering
    \caption{Comparison of the Chebyshev bounds with the empirical mean error for 10 runs of the mechanisms on the synthetic dataset with $K=10^7$ clients. The first column is the privacy budget, followed by the mean error (and standard deviation) of the estimates on the data and the 0.99-probability Chebyshev's bounds ($\boldsymbol{\alpha}$) for each mechanism. }
    \label{tab:error_emp}
    \resizebox{0.65\columnwidth}{!}{
        \begin{tabular}{rcccc}
        \toprule
        \multicolumn{1}{c}{} & \multicolumn{2}{c}{$\boldsymbol{\Mr \text{\textbf{ opt.}}}$} & \multicolumn{2}{c}{$\boldsymbol{\Ml \text{\textbf{ opt.}}} $} \\
\cmidrule(lr){2-3} \cmidrule(lr){4-5}
         $\boldsymbol{\epsilon}$ & $\boldsymbol{|\Delta \hat{m}^R-\Delta m|}$ &  $\boldsymbol{\alpha}$  & $\boldsymbol{|\Delta \hat{m}^L-\Delta m|}$ &   $\boldsymbol{\alpha}$\\ 
        \cmidrule(lr){1-5}
0.01  &      0.1241($\pm$0.1410) &              1.2586 &       0.0504($\pm$0.0337) &              1.0525 \\
0.10  &       0.0082($\pm$0.0059) &              0.1206 &       0.0046($\pm$0.0040) &              0.1060 \\
1.00  &       0.0008($\pm$0.0006) &              0.0094 &       0.0008($\pm$0.0005) &              0.0118 \\
10.00 &       0.0001($\pm$0.0000) &              0.0032 &       0.0001($\pm$0.0000) &              0.0009 \\
        \bottomrule
        \end{tabular}
    }
\end{table}

\paragraph{Regression model implementation}\label{app:implementation}
We implemented the evaluation of the logistic regression model with Python 3.7.6 and sklearn 0.22.1. 

We use Elastic-Net loss (with a 0.99 L1 component) and SAGA as the algorithm to minimize it. To balance the classes, we adjust class weights inversely proportional to class frequency.
To find these hyperparameters we do not optimize for best generalization performance, as we are interested in inducing an disparate performance between the groups.\looseness-1

We evaluated the model selection by 10 runs of hold-out cross-validation (70--30\% as the random training--testing split). We fix the PRNG seed and release the source code included in the supplementary material.

\BlankLine
We published the data and the source code to reproduce these experiments~\citep{Juarez22}.

\paragraph{Error of the DP mechanism}
To generate synthetic data for the global model's performance on new clients, we model the marginal distribution of sex to have the same mean and $\nu^2$ as the observations.
For the purpose of evaluating the error of the mechanisms, the exact distribution that we fit is not important, thus we draw samples with replacement from the set of observations.
This sampling methodology ensures that the relevant statistics are preserved and we generate enough data to represent a realistic DFL deployment.

\cref{tab:error_emp} compares the empirical error with the $0.99$-probability bounds ($\alpha$) obtained with the procedure explained in the previous section, for a range of privacy budgets ($\epsilon$).
The bounds are one order of magnitude larger than the actual error. This means that the budget that the operator would need to allocate to satisfy a certain $\alpha$ for $10^7$ clients is substantially lower than the ones shown in \cref{tab:epsilon_tradeoff}.
As a consequence, following the Chebyshev bounds from the previous section would result in an overly conservative
measurement with respect to the privacy of the users, and operators with small privacy budgets could afford more accurate
measurements without an impact on user privacy.

\end{document}